\newtheorem{proposition}{Proposition}
\newtheorem{proposition_appendix}{Proposition}[section]
\newtheorem{lemma}{Lemma}[section]
\newtheorem{property}{Property}
\newcommand{\name}{PermuteFormer}
\title{\name{}: Efficient Relative Position Encoding for Long Sequences}
\author{
  Peng Chen \\
  Peking University \\
  \texttt{chen.peng@pku.edu.cn} 
 }
\begin{document}
\maketitle
\begin{abstract}
A recent variation of Transformer, Performer, scales Transformer to longer sequences with a linear attention mechanism.
However, it is not compatible with relative position encoding, which has advantages over absolute position encoding.
In this paper, we discuss possible ways to add relative position encoding to Performer. Based on the analysis, we propose \name{}, a Performer-based model with relative position encoding that scales linearly on long sequences.
\name{} applies position-dependent transformation on queries and keys to encode positional information into the attention module. This transformation is carefully crafted so that the final output of self-attention is not affected by absolute positions of tokens.
\name{} introduces negligible computational overhead by design that it runs as fast as Performer.
We evaluate \name{} on Long-Range Arena, a dataset for long sequences, as well as WikiText-103, a language modeling dataset. The experiments show that \name{} uniformly improves the performance of Performer with almost no computational overhead and outperforms vanilla Transformer on most of the tasks. \footnote{Code is available at \url{https://github.com/cpcp1998/PermuteFormer}.}
\end{abstract}

\section{Introduction}

The Transformer architecture \citep{Vaswani:17} has achieved state-of-the-art on various fields of research, including natural language processing \citep{devlin-etal-2019-bert, Raffel20}, speech processing \citep{Baevski:20} and image processing \citep{Dosovitskiy:20, tan-bansal-2019-lxmert}. But Transformer does not scale well to long sequences, because the time complexity and memory complexity of the attention module in Transformer are both quadratic to the sequence length.
Recently, several efficient Transformers \citep{Kitaev:20, Wang:20, Zaheer:20, Xiong:21} have been proposed to speed up the model from quadratic complexity to linear complexity without significant performance loss.
Generally, they utilize efficient algorithms to approximate attention.
\S\ref{sec:effcient_transformers} briefly introduces these efficient Transformers and a more thorough review can be found in \citet{Tay:20b}.

Among these efficient Transformers, it is suggested that Performer \citep{Choromanski:20} is the fastest one \citep{Tay:20}. In this paper, we denote as Performer the family of efficient Transformers similar to \citet{Choromanski:20}, e.g., \citet{Katharopoulos:20, Peng21, Kasai:21, Likhosherstov:20}, not only \citet{Choromanski:20} itself. Performer utilizes kernel method to avoid explicit calculation of attention weights. It applies a non-linear feature map to queries and keys to get query features and key features respectively and then multiplies query features, key features, and values together directly, without applying softmax. With the appropriate ordering of matrix multiplications, Performer achieves complexity linear of the sequence length. Moreover, some implementation of unidirectional Performer \citep{Likhosherstov:20} even reduces memory footprint to constant at both training time and inference time.

Although Performer accelerates attention  to linear complexity, the existing relative position encoding \citep{shaw-etal-2018-self, dai-etal-2019-transformer, Raffel20} still has quadratic complexity with respect to the sequence length. So Performer cannot benefit from relative position encoding, which has already been a common practice for a bunch of state-of-the-art Transformers \citep{Yang19, Raffel20, He20}. Relative position encoding has several advantages over absolute position encoding. (1) Relative position encoding may be applied to sequences with arbitrary lengths, with no limitation imposed by training datasets. (2) Relative position encoding is more efficient and effective than absolute position encoding. \citep{shaw-etal-2018-self}

Besides Performer, existing relative position encodings also do not fit with other efficient Transformers. Some relative position encoding \citep{Raffel20} adds a bias to the attention matrix, and others \citep{shaw-etal-2018-self, dai-etal-2019-transformer} add a relative-position-dependent bias to key vectors. Both require explicit calculation of dot-products between query vectors and key vectors. This conflicts with the second and third categories of efficient Transformers described in Section 2 because they reduce the computation complexity by avoiding the explicit calculation of dot-products between query vectors and key vectors. As for the first category of efficient Transformers, LSH in \citet{Kitaev:20} may fail to locate major attention weights in the presence of relative position encoding; \citet{Zaheer:20, Beltagy:20} rely on global tokens heavily, whose relative positions to other tokens are not defined.

In this paper, we propose a Performer-compatible relative position encoding that scales linearly on long sequences. Performer with this novel relative position encoding is named \name{}.
\name{} applies a position-aware transformation on query features and key features to encode positional information. More specifically, we choose a random permutation $\pi:\{1,2,\cdots,d\}\to\{1,2,\cdots,d\}$ where $d$ is the dimension of query / key features per attention head, and applies the permutation $i$ times to $i$-th token's query / key feature.%
\footnote{When we say applying a permutation $\pi$ to a vector $\mathbf{x}=[x_1, x_2, \cdots, x_d]$, we mean the operation maps $\mathbf{x}$ to vector $[x_{\pi(1)}, x_{\pi(2)}, \cdots, x_{\pi(d)}]$.}
In this way, positional information is encoded into attention weights.
We prove that, although the transformation applied to query feature and key feature of a token depends on its absolute position, the effects of absolute position on query features and key features cancel out with each other on calculating dot-product of them. Thus, the final attention weights do not depend on the absolute positions, and \name{} encodes relative position only.

\name{} is as efficient as Performer, with negligible computational overhead. Permuting of query features and key features can be implemented efficiently, with computational complexity proportional to their size. Since the size is far less than the computational complexity of the whole model, the cost of permutation in \name{} is negligible compared to the overall computational cost of Performer.
The analysis above is also confirmed by the experiment results.

We evaluate \name{} on Long-Range Arena \citep{Tay:20} for bidirectional case and on WikiText-103 \citep{Merity17} for unidirectional case. Long-Range Arena is a benchmark designed to evaluate efficient Transformers on long sequences. We find that the new relative position encoding improves the performance of \name{} significantly on Long-Range Arena. It not only performs better than Performer but also out-performs the vanilla Transformer, as well as other efficient Transformers, e.g., \citet{Kitaev:20, Wang:20, Xiong:21}. WikiText-103 is a language modeling dataset. \name{} reduces the performance gap between Performer and Transformer on WikiText-103. It also speeds up the convergence of the model.

\paragraph{Contributions}

The main contribution of this paper is summarized as follows.

\begin{itemize}
    \item We discuss possible ways to add relative position encoding to Performer. We theoretically propose three properties that Performer-compatible relative position encoding should hold.
    \item We introduce \name{}, a Performer model with relative position encoding that scales linearly to long sequences. It permutes elements of query features and key features to encode positional information. It is the only Performer-compatible relative position encoding with linear complexity, as far as we know. \name{} is as efficient as Performer.
    \item We conduct extensive experiments to evaluate \name{}. It achieves strong empirical performance and obtains state-of-the-art on Long-Range Arena, a benchmark for efficient Transformers. It also improves the performance of Performer on language modeling tasks like WikiText-103.
\end{itemize}

\begin{figure}
    \centering
    \def\colorPallete{{
        "1.00 1.00 0.50",
        "1.00 0.50 1.00",
        "0.50 1.00 1.00",
        "1.00 1.00 1.00",
        "1.00 0.75 0.75",
        "0.75 1.00 0.75",
        "0.75 0.75 1.00",
    }}
    \def\vector(#1:#2:#3:#4:#5:#6) 
    {
        \foreach\y in {1,...,#4} {
            \foreach\x in {1,...,#3} {
                \pgfmathparse{\colorPallete[Mod(\x+\y*#6-#4*#6,#3)]};
                \definecolor{currentColor}{rgb}{\pgfmathresult};
                \fill[currentColor] (#1+\x*#5-#5,#2+\y*#5-#5) rectangle (#1+\x*#5,#2+\y*#5) ;
            }
        }
        \draw[line width=0.5pt] (#1,#2) -- (#1,#2+#4*#5) -- (#1+#3*#5,#2+#4*#5) -- (#1+#3*#5,#2) -- cycle ;
        \foreach\y in {2,3,...,#4} {\draw[line width=0.3pt] (#1,#2+\y*#5-#5) -- (#1+#3*#5,#2+\y*#5-#5) ;}
        \node at (#1+#3*#5/2,#2+#4*#5+0.2) {\small head size};
        \node[rotate=90] at (#1-0.2,#2+#4*#5/2) {\small seq len};
    }
    \begin{tikzpicture}
        \begin{scope}[shift={(0,10.5)}]
            \fill[yellow!10!white] (0, -0.2) rectangle (7.5, 3.2);
            \draw[yellow, dashed, thick] (0, -0.2) -- (7.5, -0.2) -- (7.5, 3.2) -- (0, 3.2) -- cycle;
            \node[anchor=east] at (7.5, 2.9) {\large \textbf{Transformer}} ;
            \node[rotate=90, anchor=north] at (0, 1.5) {softmax} ;
            \vector(1:0.7:5:6:0.25:0)
            \node at (2.6,1.5) {\Large $\boldsymbol\times$} ;
            \vector(3.2:0.7:5:6:0.25:0)
            \node at (0.6, 1.5) {$\left(\vphantom{\rule{0cm}{0.9cm}}\right.$} ;
            \node at (5.1, 1.5) {$\left.\vphantom{\rule{0cm}{0.9cm}}\right)$\Large $\boldsymbol\times$} ;
            \node at (4.6, 2.2) {\Large$\boldsymbol\top$};
            \vector(5.9:0.7:5:6:0.25:0)
            \node at (1.65,0.4) {\large query};
            \node at (3.85,0.4) {\large key};
            \node at (6.55,0.4) {\large value};
        \end{scope}
        \begin{scope}[shift={(0.3,5.5)}]
            \fill[green!10!white] (-0.3, -0.2) rectangle (7.2, 3.2);
            \draw[green, dashed, thick] (-0.3, -0.2) -- (7.2, -0.2) -- (7.2, 3.2) -- (-0.3, 3.2) -- cycle;
            \node[anchor=east] at (7.2, 2.9) {\large \textbf{Performer}} ;
            \vector(0.4:0.7:5:6:0.25:0)
            \vector(2.8:0.7:5:6:0.25:0)
            \node at (2.2, 1.5) {\Large $\boldsymbol\times$\normalsize$\left(\vphantom{\rule{0cm}{0.9cm}}\right.$} ;
            \node at (6.8, 1.5) {$\left.\vphantom{\rule{0cm}{0.9cm}}\right)$} ;
            \node at (4.6,1.5) {\Large $\boldsymbol\times$} ;
            \node at (4.2, 2.2) {\Large$\boldsymbol\top$};
            \vector(5.3:0.7:5:6:0.25:0)
            \node at (1.05,0.4) {\large query feature};
            \node at (3.45,0.4) {\large key feature};
            \node at (5.95,0.4) {\large value};
        \end{scope}
        \begin{scope}[shift={(0.3,0.5)}]
            \fill[blue!10!white] (-0.3, -0.5) rectangle (7.2, 3.2);
            \draw[blue, dashed, thick] (-0.3, -0.5) -- (7.2, -0.5) -- (7.2, 3.2) -- (-0.3, 3.2) -- cycle;
            \node[anchor=east] at (7.2, 2.9) {\large \textbf{\name{}}} ;
            \vector(0.4:0.7:5:6:0.25:1)
            \vector(2.8:0.7:5:6:0.25:1)
            \node at (2.2, 1.5) {\Large $\boldsymbol\times$\normalsize$\left(\vphantom{\rule{0cm}{0.9cm}}\right.$} ;
            \node at (6.8, 1.5) {$\left.\vphantom{\rule{0cm}{0.9cm}}\right)$} ;
            \node at (4.6,1.5) {\Large $\boldsymbol\times$} ;
            \node at (4.2, 2.2) {\Large$\boldsymbol\top$};
            \vector(5.3:0.7:5:6:0.25:0)
            \node at (1.05,0.4) {\large query feature};
            \node at (3.45,0.4) {\large key feature};
            \node at (5.95,0.4) {\large value};
            \node at (2.25, -0.1) {\large (position encoded)};
        \end{scope}
        \begin{scope}[shift={(0,0.2)}]
            \node[draw=blue,thick,fill=blue!20,inner sep=1.5mm] at (2.8,4.3) {\large Position-aware Permutation $\mathbf{P}_\pi^i$};
            \draw[-Latex,line width=1.2pt,black!70!white] (1.3,5.4) -- (1.3,4.7);
            \draw[-Latex,line width=1.2pt,black!70!white] (3.8,5.4) -- (3.8,4.7);
            \draw[-Latex,line width=1.2pt,black!70!white] (1.3,3.95) -- (1.3,3.1);
            \draw[-Latex,line width=1.2pt,black!70!white] (3.8,3.95) -- (3.8,3.1);
        \end{scope}
        \begin{scope}[shift={(0,5.2)}]
            \node[draw=green,thick,fill=green!20,inner sep=1.5mm] at (2.6,4.3) {\large \quad Feature Map $\boldsymbol\phi$\quad\ };
            \draw[-Latex,line width=1.2pt,black!70!white] (1.3,5.4) -- (1.3,4.7);
            \draw[-Latex,line width=1.2pt,black!70!white] (3.8,5.4) -- (3.8,4.7);
            \draw[-Latex,line width=1.2pt,black!70!white] (1.3,3.95) -- (1.3,3.1);
            \draw[-Latex,line width=1.2pt,black!70!white] (3.8,3.95) -- (3.8,3.1);
        \end{scope}
    \end{tikzpicture}
    \caption{Attention in Transformer, Performer and \name{}.
    Although attention is multi-headed in all of them, only one head is illustrated for clarity.
    \textbf{Transformer} applies softmax on dot-products of queries and keys to get the attention matrix, and then multiplies attention matrix and values to obtain outputs of attention module.
    \textbf{Performer} applies feature map, a non-linear projection, to queries and keys to get query features and key features. Then, it multiplies query features, key features and values from right to left.
    \textbf{\name{}} applies a position-aware permutation on query features and key features first, and then do multiplications the same way as Performer.
    Each token's query / key feature is illustrated as a row of blocks in the figure, and its elements are marked with different colors.
    The position-aware permutation permutes elements of each token's query / key feature along the \textit{head size} dimension in each attention head. Depending on the token's position, the permutation applied to query / key feature is different.
    Note that for Performer and \name{}, only the numerator in Equation~\ref{eq:new_pos_final} is illustrated, as the denominator is simpler than the numerator.
    }
    \label{fig:\name{}}
\end{figure}

\section{Related Work}

\paragraph{Efficient Transformers}
\label{sec:effcient_transformers}

Transformers suffer from complexity quadratic to the sequence length. Various methods have been proposed to improve the efficiency of Transformers. We classify them into three categories.
The first category of efficient Transformer omits the calculation of part of the attention matrix, exploiting the sparsity of the attention matrix. \citet{Kitaev:20} groups queries into buckets by local sensitive hash and computes intra-bucket attention weights only. \citet{Zaheer:20, Beltagy:20} limit attention matrix to specific sparse shapes. The second kind of efficient Transformers lowers matrix rank to reduce computation. \citet{Wang:20} projects keys and values to constant length independent of sequence lengths. \citet{Tay:20c} generates attention weights without keys. The third category of efficient Transformers, named Performer in this paper, leverages kernel methods to speed models up. \citet{Choromanski:20,Peng21} view attention weights as kernel function of queries and keys, so they can be approximated by random features. \citet{Katharopoulos:20} relaxes the approximation requirement and finds that the model still works. \citet{Likhosherstov:20, Kasai:21} implement the unidirectional Performer as RNN so that their memory footprint is constant.

\paragraph{Relative Position Encoding}

Transformer itself does not capture the positional information of tokens, as it is invariant to permutations of tokens. \citet{Vaswani:17} solves this problem by adding a position embedding vector to the input of Transformer. Because the added position embedding depends on the absolute positions of tokens in a sequence, it is called absolute position encoding. For better representation of positional relation between tokens, \citet{shaw-etal-2018-self} introduces relative position encoding to encode distances between tokens directly. There are two styles of relative position encoding. \citet{shaw-etal-2018-self} adds relative position embedding to keys and values, while \citet{dai-etal-2019-transformer} adds relative position embedding to queries and keys. \citet{Raffel20}, as the other style of relative position encoding, adds bias directly to the attention weights.

\paragraph{Concurrent Work}

\citet{Su21} introduces RoFormer with a new kind of relative position encoding named RoPE, which is interoperable with Performer. Briefly, RoPE is a multiplicative sinusoidal absolute position embedding that rotates query (feature) vectors and key (feature) vectors according to their positions.

However, to make RoPE independent of absolute position, they sacrifice the property of attention matrices that every row sums to one.
Moreover, they only discuss the possibility of integrating RoPE with Performer, but no experiment result is reported on such a model.

On the other hand, \name{}'s position encoding preserves the property of  attention matrices mentioned above.
In this paper, we compare the performance of \name{} with RoFormer through experiment. The result shows that \name{} fits the data better than RoFormer.

\section{Methods}

We propose an efficient relative position encoding that is compatible with Performer architecture. Performer with this new relative position encoding is named as \name{}, because it permutes elements of query feature and key feature to encode positional information. The difference among vanilla Transformer, Performer and \name{} is illustrated in Figure~\ref{fig:\name{}}.

In this section, we first introduce Transformer and Performer briefly, and then describe details of \name{}. 
For brevity and clarity, discussions in this section focus on a single head in multi-head attention. They can be directly applied to the whole multi-head attention.

\subsection{Transformer and Performer}
\label{sec:performer}

We give a brief introduction of Transformer and Performer's attention module in this section. Other parts of Transformer architecture \citep{Vaswani:17} are omitted as they are unmodified in Performer and \name{}.

The attention module in Transformer is a mapping from a sequence of vectors $\{\mathbf{x}^\mathrm{in}_i\}_{i=1}^{L}$ to another sequence of vectors $\{\mathbf{x}^\mathrm{out}_i\}_{i=1}^{L}$ with the same length $L$.
In the attention module, the input vectors are first linearly mapped to three representations, named \textbf{query}, \textbf{key} and \textbf{value}. Formally,
\begin{align}
\mathbf{q}_i=\mathbf{W}_q\mathbf{x}^\mathrm{in}_i,\  \mathbf{k}_i=\mathbf{W}_k\mathbf{x}^\mathrm{in}_i,\  \mathbf{v}_i=\mathbf{W}_v\mathbf{x}^\mathrm{in}_i,\ 
\end{align}
where $\mathbf{W}_q$, $\mathbf{W}_k$, $\mathbf{W}_v$ are transformation matrices for query, key and value, respectively.
Then, similarities between queries and keys are calculated. The similarities are normalized to produce attention weights
\begin{align}
\label{eq:attn_weight}
\alpha_{ij}=\frac{\mathrm{sim}(\mathbf{q}_i, \mathbf{k}_j)}{\sum_{l=1}^L\mathrm{sim}(\mathbf{q}_i, \mathbf{k}_l)},
\end{align}
where $\mathrm{sim}(\mathbf{q}_i, \mathbf{k}_j)$ is the similarity of vector $\mathbf{q}_i$ and vector $\mathbf{k}_j$.
Finally, output vectors $\mathbf{x}^\mathrm{out}_i$ are obtained by weighted sum of values with weight $\{\alpha_{ij}\}_{i,j=1}^L$.
\begin{align}
\mathbf{x}^\mathrm{out}_i=\sum_{j=1}^L\alpha_{ij}\mathbf{v}_j.
\end{align}
Vanilla Transformer \citep{Vaswani:17} adopts the following function as the similarity metric of queries and keys. 
\begin{align}
\mathrm{sim}_\mathrm{Trans}(\mathbf{q}_i, \mathbf{k}_j) = \exp{\left(\mathbf{q}_i^\top\mathbf{k}_j/\sqrt{d}\right)}.
\end{align}

To reduce computation and memory cost, Performer's similarity function is approximated with kernel trick.
\begin{align}
\label{eq:kernel}
\mathrm{sim}_\mathrm{Perf}(\mathbf{q}_i, \mathbf{k}_j)=\boldsymbol\phi(\mathbf{q}_i)^\top\boldsymbol\phi(\mathbf{k}_j),
\end{align}
where $\boldsymbol\phi(\cdot)$ is a non-linear \textbf{feature map} from $\mathbb{R}^d$ to $\mathbb{R}^m$ for some model-specific $m$,
so that the attention module can be expressed as follows.
\begin{align}
\mathbf{x}^\mathrm{out}_i=
\left(\frac{\boldsymbol\phi(\mathbf{q}_i)^\top\sum_{j=1}^L\boldsymbol\phi(\mathbf{k}_j)\mathbf{v}_j^\top}
{\boldsymbol\phi(\mathbf{q}_i)^\top\sum_{j=1}^L\boldsymbol\phi(\mathbf{k}_j)}
\right)^\top.
\end{align}
We call $\boldsymbol\phi(\mathbf{q}_i)$ as \textbf{query feature} and $\boldsymbol\phi(\mathbf{k}_i)$ as \textbf{key feature}.

In this way, the $O(L^2)$ attention weight matrix is not explicitly calculated, so that the attention module costs only $O(L)$ time and memory, rather than the $O(L^2)$ complexity as vanilla Transformer. Different Performers differ by the choice of the mapping $\boldsymbol\phi(\cdot)$. A simple working choice is the ReLU function $\boldsymbol\phi(\mathbf{x})=\max(\mathbf{x},\mathbf{0})$ \citep{Choromanski:20}.

\subsection{Relative Position Encoding for Performer}

In this section, we discuss adding relative position encoding to Performer. We choose to modify the similarity function (Equation~\ref{eq:kernel}) to encode positional information.
Specifically, we introduce an additional layer of position-dependent linear transformation over query features and key features. Now, the similary function becomes
\begin{align}
\label{eq:new_pos}
\mathrm{sim}_\mathrm{Perm}(\mathbf{q}_i,\mathbf{k}_j)=\big(\mathbf{M}_i\boldsymbol{\phi}(\mathbf{q}_i)\big)^\top\big(\mathbf{N}_j\boldsymbol{\phi}(\mathbf{k}_j)\big),
\end{align}
where $\mathbf{M}_i, \mathbf{N}_j\in\mathbb{R}^{m\times m}$ are matrices parameterized by token's position $i$ and $j$.

To ensure the similarity function depends only on the relative positions rather than absolute ones, $\mathbf{M}_i,\mathbf{N}_j$ must hold the following property.
\begin{property}[Relative]
\label{prop:relative}
$\mathbf{M}_i^\top\mathbf{N}_j$ is a function of $i-j$, i.e., it only depends on $i-j$.
\end{property}

To prevent the similarity function from exploding as the sequence length grows, we have
\begin{property}[Bounded]
\label{prop:bounded}
For a bidirectional model, there is an $B$ that for all $i,j\in\mathbb{Z}$, $\|\mathbf{M}_i^\top\mathbf{N}_j\|<B$.
For a unidirectional model, there is an $B$ that for all $i>j\in\mathbb{Z}$, $\|\mathbf{M}_i^\top\mathbf{N}_j\|<B$.
\end{property}

Additionally, the similarity function should be positive; otherwise, the model would be numerically unstable. If the similarity function alters between positive and negative values, in some cases, the denominator in Equation~\ref{eq:attn_weight} may be zero while its numerator is not zero, leading the output of attention module tend to infinity. To keep the similarity function positive, one simple but efficient solution is to make all elements of query features and key features positive \citep{Choromanski:20, Katharopoulos:20}.
\begin{property}[Positive]
\label{prop:positive}
The linear transformations corresponding to matrix $\mathbf{M}_i$ and $\mathbf{N}_j$ map $\mathbb{R}^m_+$ to $\mathbb{R}^m_+$.
\end{property}

We prove that, $\mathbf{M}_i^\top\mathbf{N}_j$ must be in a specific form to fulfill the requirement of Property~\ref{prop:relative}.
\begin{proposition}
\label{prop:ortho}
Let $\left\{\mathbf{M}_i\right\}_{i=-\infty}^\infty$ be a series of $l\times m$ matrices, $\left\{\mathbf{N}_i\right\}_{i=-\infty}^\infty$ be a series of $l\times n$ matrices.
Then,  $\mathbf{M}_i^\top\mathbf{N}_j$ only depends on $i-j$, if and only if that, there is an integer $l'$, matrices $\mathbf{R}\in\mathbb{R}^{l'\times m}$, $\mathbf{Q}\in\mathbb{R}^{l'\times n}$, and an invertible matrix $\mathbf{P}\in\mathbb{R}^{l'\times l'}$, such that
\begin{align}
    \mathbf{M}_i^\top\mathbf{N}_j=(\mathbf{P}^{-i\top}\mathbf{R})^\top(\mathbf{P}^j\mathbf{Q}),
\end{align}
\end{proposition}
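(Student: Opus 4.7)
The ``if'' direction is a one-line check: $(\mathbf{P}^{-i\top}\mathbf{R})^\top(\mathbf{P}^j\mathbf{Q})=\mathbf{R}^\top\mathbf{P}^{j-i}\mathbf{Q}$ depends only on $i-j$. The real work is in the ``only if'' direction, and my plan is to treat it as a realization problem for a bi-infinite block Toeplitz operator.

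Under the hypothesis, the map $f(k):=\mathbf{M}_i^\top\mathbf{N}_{i+k}$ is well-defined (independent of $i$). I would then consider the bi-infinite block matrix $T$ whose $(i,j)$-th block is $T_{ij}=f(i-j)\in\mathbb{R}^{m\times n}$; by construction $T$ is block-Toeplitz. Stacking the $\mathbf{M}_i$ as block rows and the $\mathbf{N}_j$ as block columns exhibits $T$ as a product that factors through $\mathbb{R}^l$, so the column space $V_T$ of $T$ has dimension at most $l$ and is in particular finite-dimensional; set $l':=\dim V_T$.

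The key observation is that the shift $\sigma$ on the block-row index (sending the entry at position $(i,\alpha)$ of a vector to position $(i+1,\alpha)$) carries the $j$-th block column of $T$ to its $(j+1)$-st block column, and $\sigma^{-1}$ carries it to its $(j-1)$-st. Hence $V_T$ is invariant under both $\sigma$ and $\sigma^{-1}$, so $\sigma$ restricts to a bijection of the finite-dimensional space $V_T$. I would pick any basis of $V_T$ and let $\mathbf{P}\in\mathbb{R}^{l'\times l'}$ be the matrix of $\sigma|_{V_T}$ in that basis, which is invertible.

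Finally, let $\mathbf{Q}\in\mathbb{R}^{l'\times n}$ collect the coordinates of the $n$ columns of the $j=0$ block column of $T$ in this basis, and let $\mathbf{R}\in\mathbb{R}^{l'\times m}$ collect, in the dual basis, the linear functionals on $V_T$ that ``read off'' the block-row-$0$ entries. Since the $(k,\alpha)$-entry of the $\beta$-th column of the $j=0$ block column equals the $(0,\alpha)$-entry of $\sigma^{-k}$ applied to that same column, translating into coordinates yields $f(k)=\mathbf{R}^\top\mathbf{P}^{-k}\mathbf{Q}$, hence $\mathbf{M}_i^\top\mathbf{N}_j=\mathbf{R}^\top\mathbf{P}^{j-i}\mathbf{Q}=(\mathbf{P}^{-i\top}\mathbf{R})^\top(\mathbf{P}^j\mathbf{Q})$. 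I expect the main obstacle to be the simultaneous verification that the column space is shift-invariant \emph{and} that the shift restricted to it is invertible: this combination is what genuinely uses both the Toeplitz hypothesis and the finite-rank factorization through $\mathbb{R}^l$. Once it is established, the explicit construction of $\mathbf{P},\mathbf{Q},\mathbf{R}$ by coordinate readout is routine bookkeeping.
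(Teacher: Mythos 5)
Your proof is correct, but it takes a genuinely different route from the paper's. The paper proceeds by a two-step matrix reduction: a first lemma projects away the part of $\mathbb{R}^l$ not reached by the images of the $\mathbf{M}_i,\mathbf{N}_i$ and removes common kernels, a second lemma pads the families with extra columns until all matrices are square and invertible, and only then does it solve the recursion ${\mathbf{M}'}_0^\top\mathbf{N}'_{i-1}={\mathbf{M}'}_1^\top\mathbf{N}'_i$ to obtain $\mathbf{N}'_i=\mathbf{A}^i\mathbf{N}'_0$ and $\mathbf{M}'_i=\mathbf{A}^{-i\top}\mathbf{M}'_0$, yielding the factorization. You instead give a realization-style argument: the bi-infinite block-Toeplitz operator built from $f$ factors through $\mathbb{R}^l$, so its column space $V_T$ is finite dimensional; $V_T$ is invariant under the shift and its inverse, hence the shift restricts to an invertible operator $\mathbf{P}$ on $V_T$, and reading off the block-row-$0$ coordinates gives $f(k)=\mathbf{R}^\top\mathbf{P}^{\pm k}\mathbf{Q}$ directly. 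This avoids the paper's double induction and explicit padding, and it produces $\mathbf{P}$ canonically as the shift acting on a space of dimension $l'=\dim V_T\le l$, in the spirit of a minimal Kronecker/Ho--Kalman realization; the paper's route is more elementary, staying entirely within finite matrices and explicit constructions, and along the way exhibits full-rank square representatives of the families themselves. One bookkeeping remark: with your definition $f(k)=\mathbf{M}_i^\top\mathbf{N}_{i+k}$ the Toeplitz blocks are $T_{ij}=f(j-i)$ rather than $f(i-j)$, so the exponent signs in your last two displayed identities are flipped relative to each other; this is harmless, since replacing $\mathbf{P}$ by $\mathbf{P}^{-1}$ (still invertible) restores the stated form $\mathbf{M}_i^\top\mathbf{N}_j=(\mathbf{P}^{-i\top}\mathbf{R})^\top(\mathbf{P}^j\mathbf{Q})$.
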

Proof is given in Appendix.
Although this proposition does not impose any additional constraint on $\mathbf{M}_i$ and $\mathbf{N}_j$, it suggests that effectively we only need to consider the case that
\begin{align}
    \label{eq:invert}
    \mathbf{M}_i=\mathbf{P}^{-i\top}\mathbf{R}, \mathbf{N}_j=\mathbf{P}^j\mathbf{Q}
\end{align}

\subsection{\name{}}

Based on the analysis of the previous section, we introduce \name{} by selecting specific $\mathbf{P}, \mathbf{Q}, \mathbf{R}$ in Equation~\ref{eq:invert}.

To meet constraints imposed by Property~\ref{prop:bounded} and Property~\ref{prop:positive}, we choose the following solution for \name{}.
\begin{align}
\label{eq:permutation}
\mathbf{R}=\mathbf{Q}=\mathbf{I}, 
\mathbf{P}=r^{-1}\mathbf{P}_\pi,
\end{align}
where
$r=1$ for bidirectional models and $0<r<1$ for unidirectional models,
$\pi:\{1,2,\cdots,m\}\to\{1,2,\cdots,m\}$ is a permutation and $\mathbf{P}_\pi$ is the corresponding permutation matrix.
(A permutation matrix is a square binary matrix that has exactly one entry of 1 in each row and each column and 0s elsewhere. For permutation $\pi$ the corresponding permutation matrix $\mathbf{P}_\pi$ is the matrix that $\mathbf{P}_{\pi,ij}=1$ if $\pi(i)=j$; $\mathbf{P}_{\pi,ij}=0$ otherwise.)
Note that different attention heads may have different $\mathbf{P}_\pi$ and $r$, so that both long-term and short-term dependencies are captured.

Substitute Equation~\ref{eq:invert},~\ref{eq:permutation} into Equation~\ref{eq:new_pos}, we get the similarity function of \name{}
\begin{align}
\label{eq:new_pos_final}
\mathrm{sim}_\mathrm{Perm}(\mathbf{q}_i,\mathbf{k}_j)=\big(r^i\mathbf{P}_\pi^i\boldsymbol{\phi}(\mathbf{q}_i)\big)^\top\big(r^{-j}\mathbf{P}_\pi^j\boldsymbol{\phi}(\mathbf{k}_j)\big).
\end{align}

\name{} can encode relative positions up to the order of the permutation $\pi$. \citet{Goh:91} proves that the order of random permutation grows exponentially with the head size. For a model with the same size as BERT-base \citep{devlin-etal-2019-bert}, the dimension of queries / keys per attention head is 64, corresponding to an average order of over 3000. To further extend \name{}'s ability to encode long sequences, we choose different permutations for different attention heads, so that the longest distance \name{} can encode is the least common multiple of all permutations' orders, which can be up to 1e27 for a model with head size of 64.

There are two additional parameters \name{} introduces, $\pi$ and $r$. As $\pi$ is a discrete parameter that cannot be optimized by gradient-based methods, we treat it as a hyper-parameter of the model. We randomly sample $\pi$ at initialization of the neural network and fix its value during the whole training process. Although the model may get a better performance on training $\pi$, we find that a random permutation is good enough for \name{} to work, so we do not tune $\pi$ to save energy. Parameter $r$, on the other hand, can be optimized by gradient-based methods, but we also treat it as a hyper-parameter.

\subsection{Computational Cost}
\label{sec:speed}
We analyze computational cost of \name{} in this section. \name{} is as fast as Performer, which is the most efficient Transformer \citep{Tay:20} to our knowledge.

Let $L$ denote the length of the sequence, $H$ denote the number of heads in the model, and $m$ denote the per-head hidden dimension of query features and key features.

The computational overhead introduced by \name{} includes the computation of $\mathbf{P}_\pi^i$, the application of linear transformation $\mathbf{P}_\pi^i$ on query features and key features, as well as calculation of powers of $r$.

Multiplication of permutation matrices is equivalent to multiplication of corresponding permutations. In our case, it reads that
\begin{align}
\mathbf{P}_\pi^i=\mathbf{P}_{\pi^i},
\end{align}
where $\pi^i$ is the $i$-th power of permutation $\pi$ that
\begin{align}
\pi^i(x)=\pi(\pi^{i-1}(x))\mathrm{\ and\ }\pi^0(x)=x.
\end{align}
We can compute these $\pi^i$ and cache them before training and inference. This takes $O(LHm)$ time and $O(LHm)$ memory.

As $\mathbf{P}_\pi^i$ is a permutation matrix, there is no need to do cumbersome matrix-vector multiplication. Instead, a \texttt{gather} operation on query features and key features is enough. The memory and time complexity of this \texttt{gather} operation is equal to the size of query features and key features, i.e., $O(LHm)$.

Powers of scalar $r$ can be calculated easily.

Thus, the total overhead introduced by \name{} is $O(LHm)$. Since the complexity of attention in Performer is $O(LHm^2)$, this overhead is negligible.

\subsection{Trick for Two-Dimensional Case}
\label{sec:2d}
As Transformer-based models are getting popular in fields other than natural language processing these days, it is worth noting that \name{} is also applicable to 2D inputs like images and multi-modal documents \citep{Xu20}.

One naive way to deal with two-dimension inputs is to follow the convention in benchmark \citet{Tay:20}. Pixels in the 2D space are first flattened to an 1D sequence before fed into the model. However, this causes problems for relative position encoding. It makes the rightmost pixel in the first row adjacent to the leftmost pixel in the second row, so the relative position of these two distant pixels is extremely close in the 1D sequence, which is incorrect. It is almost impossible for the model to learn something meaningful out of the wrong relative position.

To remedy this, we adapt \name{}'s attention for 2D inputs. We permute some elements of the query / key feature according to a pixel's horizontal position, while others according to its vertical position. More precisely, we modified equation \ref{eq:new_pos_final} as follows
\begin{align}
\label{eq:2d}
\begin{split}
\mathrm{sim}_\mathrm{Perm}(\mathbf{q}_i,\mathbf{k}_j)=\big(\mathbf{P}^{x_i}_{\pi_x}\mathbf{P}^{y_i}_{\pi_y}\boldsymbol{\phi}(\mathbf{q}_i)\big)^\top\\
\big(\mathbf{P}^{x_j}_{\pi_x}\mathbf{P}^{y_j}_{\pi_y}\boldsymbol{\phi}(\mathbf{k}_j)\big),
\end{split}
\end{align}
where $(x_i, y_i)$, $(x_j, y_j)$ are coordinates of the $i$-th and $j$-th pixel, respectively. $\pi_x$ and $\pi_y$ are two permutations commutative with each other. 

\section{Experiments}

We evaluate bidirectional \name{} on Long-Range Arena, which consists of many long-sequence tasks. Unidirectional \name{} is evaluated on WikiText-103, a language modeling task.%
\footnote{
Long-Range Arena can be fetched from \url{https://github.com/google-research/long-range-arena}.
WikiText-103 can be fetched from \url{https://s3.amazonaws.com/research.metamind.io/wikitext/wikitext-103-v1.zip}.
}


\begin{table*}
    \centering
    \begin{tabular}{c||llll|l}
    \toprule
        Model & Text & Retrieval & Image & Pathfinder & Average \\
    \midrule
        Transformer & 63.99${}_{23}$ & 80.02${}_{26}$ & 42.83${}_{142}$ & 72.40${}_{165}$ & 64.81${}_{55}$ \\
        w/ sinusoidal pos. emb. & 64.06${}_{17}$ & 79.81${}_{33}$ & \textbf{43.30}${}_{152}$ & \textbf{73.00}${}_{183}$ & 65.04${}_{60}$ \\
    \midrule
        Reformer & 64.88 & 78.64 & 43.29 & 69.36 & 64.04 \\
        Linformer & 55.91 & 79.37 & 37.84 & 67.60 & 60.18 \\
        Nystr{\"{o}}mformer & 65.52 & 79.56 & 41.58 & 70.94 & 64.40 \\
        Performer & 63.95${}_{42}$ & 79.82${}_{30}$ & 43.08${}_{174}$ & 72.63${}_{108}$ & 64.87${}_{53}$ \\
        RoFormer & \textbf{66.00}${}_{17}$ & 75.27${}_{55}$ & 26.16${}_{109}$ & 58.87${}_{26}$ & 56.57${}_{31}$ \\
    \midrule
        \name{} & 65.95${}_{26}$ & \textbf{80.66}${}_{26}$ & 43.02${}_{52}$ & 72.91${}_{100}$ & \textbf{65.64}${}_{30}$ \\
        w/o 2D rel. pos. & 65.95${}_{26}$ & \textbf{80.66}${}_{26}$ & 36.10${}_{95}$ & 65.72${}_{59}$ & 62.10${}_{29}$ \\
        w/o Property~\ref{prop:positive} & 50.27 & 70.62 & 10.00 & 50.05 & 45.24 \\
    \bottomrule
    \end{tabular}
    \caption{Performance on Long-Range Arena in accuracy. Results of Transformer, Performer, RoFormer and all variants of \name{} are evaluated by us. Results for Reformer, Linformer and Nystr{\"{o}}mformer are taken from \citet{Xiong:21}. Numbers reported by us are average accuracies of five runs. Standard deviations are shown as subscripts, in units of 0.01.}
    \label{tab:main}
\end{table*}

\begin{table*}
    \centering
    \begin{tabular}{c||cccc}
    \toprule
        Model & Text (4K) & Retrieval (4K) & Image (1K) & Pathfinder (1K) \\
    \midrule
        Transformer & 622 (1.00$\times$) & 2404 (1.00$\times$) & 26 (1.00$\times$) & 180 (1.00$\times$) \\
        Reformer & 437 (0.70$\times$) & 1086 (0.45$\times$) & 30 (1.15$\times$) & 153 (0.85$\times$) \\
        Linformer & 323 (0.52$\times$) & 483 (0.20$\times$) & 14 (0.54$\times$) & 68 (0.38$\times$) \\
        Nystr{\"{o}}mformer & 332 (0.53$\times$) & 566 (0.24$\times$) & 15 (0.58$\times$) & 65 (0.36$\times$) \\
        Performer & 354 (0.57$\times$) & 553 (0.23$\times$) & 13 (0.50$\times$) & 61 (0.34$\times$) \\
        \name{} & 361 (0.58$\times$) & 550 (0.23$\times$) & 13 (0.50$\times$) & 62 (0.34$\times$) \\
    \midrule
        Performer + T5-style & 28585 & 81070 & 3697 & 24601 \\
        pos. emb. (estimated) & (46.0$\times$) & (33.7$\times$) & (142$\times$) & (137$\times$)\\
    \bottomrule
    \end{tabular}
    \caption{Training time for one epoch in seconds. Ratio to Transformer is included in parentheses. Lower is better. The sequence length of the first two tasks is 4000, while that of the last two is 1024.}
    \label{tab:time}
\end{table*}

\begin{table*}
    \centering
    \begin{tabular}{c||cccc}
    \toprule
        Model & Text (4K) & Retrieval (4K) & Image (1K) & Pathfinder (1K) \\
    \midrule
        Transformer & 72.03 (1.00$\times$) & 20.23 (1.00$\times$) & 3.28 (1.00$\times$) & 3.89 (1.00$\times$) \\
        Reformer & 30.60 (0.42$\times$) & 13.57 (0.67$\times$) & 13.44 (4.10$\times$) & 13.49 (3.47$\times$) \\
        Linformer & 19.82 (0.28$\times$) & 4.33 (0.21$\times$) & 3.31 (1.01$\times$) & 4.20 (1.08$\times$) \\
        Nystr{\"{o}}mformer & 23.55 (0.33$\times$) & 9.28 (0.46$\times$) & 7.01 (2.14$\times$) & 9.14 (2.35$\times$) \\
        Performer & 30.20 (0.42$\times$) & 5.09 (0.25$\times$) & 3.01 (0.92$\times$) & 3.80 (0.98$\times$) \\
        \name{} & 31.18 (0.43$\times$) & 5.18 (0.26$\times$) & 2.99 (0.91$\times$) & 3.89 (1.00$\times$) \\
    \bottomrule
    \end{tabular}
    \caption{Inference latency for one sample in milliseconds. Ratio to Transformer is included in parentheses. Lower is better. The sequence length of the first two tasks is 4000, while that of the last two is 1024.}
    \label{tab:infer_time}
\end{table*}

\subsection{Long-Range Arena}

Long-Range Arena \citep{Tay:20} is a benchmark for efficient Transformers.
It concentrates on efficient Transformers' performance on long sequences. The benchmark consists of five subtasks from various domains:
byte-level text classification, byte-level document retrieval, image classification on sequence of pixels, Pathfinder, and long ListOps.
We follow the evaluation protocol of \citet{Tay:20}, except that we exclude the long ListOps task from the benchmark, because a simple classifier on the first token\footnote{This classifier outputs \texttt{0} if the first token is \texttt{[MIN}, outputs \texttt{9} if the first token of the sequence is \texttt{[MAX}, and outputs \texttt{4} otherwise. It achieves an accuracy of 37.25 on the test set.} performs on par with the best model reported in \citet{Tay:20}.
In the four selected tasks, image classification has 10 labels, while the others are binary classification tasks.

\subsubsection{Setup and Implementations}
We compare our \name{} with the vanilla Transformer and Performer. A version of \citet{Su21} is also implemented on Performer for comparison. In addition, we also list performances of other efficient Transformers from \citet{Xiong:21}, including Reformer \citep{Kitaev:20}, Linformer \citep{Wang:20} and Nystr{\"{o}}mformer \citep{Xiong:21}. Conventional relative position encoding, such as \citet{shaw-etal-2018-self, dai-etal-2019-transformer, Raffel20}, is not included, as it is almost computational infeasible to apply them to such long sequences. 

For efficiency, we choose a simple feature map
\begin{align}
\label{eq:relu_kernel}
\boldsymbol\phi(\mathbf{x}) = \mathrm{max}(\mathbf{x},\mathbf{0})+\epsilon,
\end{align}
for both Performer and \name{}.
$\epsilon$ is added to the features to ensure that the denominator in Equation~\ref{eq:attn_weight} is not zero. We set $\epsilon=0.001$.

In this paper, all neural networks are trained from scratch. Learning rates are manually tuned on Transformer to match the results reported by other papers. Then, these hyper-parameters are fixed on training of Performer and \name{}. Model sizes are the same as those described in \citet{Tay:20}. The hidden dimension of query features and key features are four times of that of queries and keys. Absolute position embedding is disabled for \name{}. Models are optimized with Adam \citep{Kingma15}. More details of hyper-parameters can be found in the appendix. Each experiment is run five times and the average accuracy is reported. Experiments are done on machines with 8 V100 GPUs.

\subsubsection{Results}

\paragraph{Performance}

The results are summarized in Table \ref{tab:main}. It shows that the relative position encoding in \name{} significantly improves the performance of Performer in all the tasks, including both language tasks and vision tasks. It not only achieves better accuracy than existing efficient Transformers without relative position encoding, but also performs better than vanilla Transformer, as well as Performer with \citet{Su21}'s relative position encoding.

\paragraph{Efficiency}

We record the training time of each model on all the tasks, as well as their latency on inference. The result is listed in Table~\ref{tab:time} and Table~\ref{tab:infer_time}. It shows that Performer runs around two to three times faster than Transformer. The second line and the third line of the table indicate that \name{}'s speed is almost the same as that of Performer. This aligns with our analysis in \S~\ref{sec:speed} that the overhead of \name{} is negligible compared to the computation cost of Performer itself.

We take T5 \citep{Raffel20} as an example to illustrate that existing relative position encoding is computationally infeasible for long sequences. We train Performer with T5 with a few iterations to estimate the running time for one epoch. The result is shown in the last line of Table~\ref{tab:time}. It indicates that T5 is significantly slower than Transformer, not to say Performer.

\subsubsection{Ablation Study}

We evaluate whether 2D relative position encoding is useful for \name{}. We train \name{} with 1D relative position encoding, and the result is shown in the second last line of Table~\ref{tab:main}. As expected, its performance drops significantly for tasks with 2D inputs. Thus, 1D relative position encoding is harmful to vision tasks as discussed in \S~\ref{sec:2d}.

We also justify that Property~\ref{prop:positive} is necessary for \name{}, i.e., the transformation should preserve positiveness of query features and key features. We train a \name{} with the permutation matrix $\mathbf{P}_\pi$ replaced by a random orthogonal matrix. The result is listed in the last line of Table~\ref{tab:main}, that \name{} without Property~\ref{prop:positive} does not converge on most of the tasks.

\begin{table}
    \centering
    \begin{tabular}{c|c}
    \toprule
        Model & PPL \\
    \midrule
        Transformer\citep{Vaswani:17} & 30.18 \\
    \midrule
        Performer\citep{Choromanski:20} & 36.87 \\
        \name{} & 32.49 \\
        \name{} w/o $r$ & 35.76 \\
        \name {} w/o $\mathbf{P}_\pi$ & 33.08 \\
    \bottomrule
    \end{tabular}
    \caption{Perplexity (PPL) of models on test split of WikiText-103 language modeling dataset.}
    \label{tab:wikitext}
\end{table}

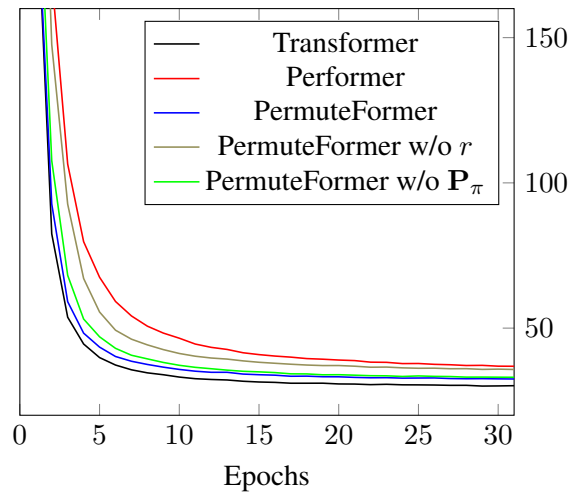
\begin{figure}
    \centering
    \begin{tikzpicture}
    \begin{axis}[
        title={Perplexity on WikiText-103},
        xlabel={Epochs},
        xmin=0, xmax=31,
        ymin=20, ymax=160,
        ytick pos=right,
        legend pos=north east,
        grid style=dashed,
        width=1.05*\linewidth,
        every axis plot/.append style={semithick},
    ]
    
    \addplot[
        color=black,
        ]
        coordinates {
            (1,207.49)(2,82.46)(3,53.75)(4,44.53)(5,39.86)
            (6,37.29)(7,35.65)(8,34.6)(9,33.95)(10,33.15)
            (11,32.59)(12,32.32)(13,32.15)(14,31.71)(15,31.45)
            (16,31.32)(17,31.03)(18,31.03)(19,31.04)(20,30.78)
            (21,30.74)(22,30.54)(23,30.66)(24,30.48)(25,30.46)
            (26,30.41)(27,30.3)(28,30.31)(29,30.05)(30,30.1)
            (31,30.18)
        };
        \addlegendentry{Transformer}
    
        \addplot[
        color=red,
        ]
        coordinates {
            (1,318.12)(2,171.05)(3,106.47)(4,79.76)(5,67.51)
            (6,59.17)(7,54.19)(8,50.67)(9,48.29)(10,46.53)
            (11,44.52)(12,43.38)(13,42.66)(14,41.52)(15,40.88)
            (16,40.38)(17,40.01)(18,39.51)(19,39.3)(20,39.01)
            (21,38.82)(22,38.28)(23,38.22)(24,37.79)(25,37.82)
            (26,37.53)(27,37.39)(28,37.13)(29,37.18)(30,36.9)
            (31,36.87)
        };
        \addlegendentry{Performer}
        
        \addplot[
        color=blue,
        ]
        coordinates {
            (1,207.83)(2,92.74)(3,59.1)(4,48.27)(5,43.46)
            (6,40.23)(7,38.59)(8,37.49)(9,36.56)(10,35.78)
            (11,35.17)(12,34.79)(13,34.79)(14,34.18)(15,33.96)
            (16,33.8)(17,33.43)(18,33.47)(19,33.26)(20,33.24)
            (21,33.04)(22,32.91)(23,32.94)(24,32.78)(25,32.82)
            (26,32.85)(27,32.64)(28,32.57)(29,32.6)(30,32.53)
            (31,32.49)
        };
        \addlegendentry{\name{}}
        
        \addplot[
        color=yellow!50!black,
        ]
        coordinates {
            (1,300.64)(2,147.64)(3,92.68)(4,67.09)(5,55.54)
            (6,49.28)(7,46.19)(8,44.23)(9,42.6)(10,41.27)
            (11,40.33)(12,39.71)(13,39.34)(14,38.72)(15,38.24)
            (16,37.9)(17,37.57)(18,37.29)(19,37.11)(20,37.12)
            (21,36.89)(22,36.54)(23,36.59)(24,36.32)(25,36.18)
            (26,36.21)(27,36.02)(28,36.04)(29,35.79)(30,35.88)
            (31,35.76)
        };
        \addlegendentry{\name{} w/o $r$}
        
        \addplot[
        color=green,
        ]
        coordinates {
            (1,229.12)(2,107.64)(3,68.17)(4,53.13)(5,47.01)
            (6,43.04)(7,40.66)(8,39.41)(9,38.18)(10,37.2)
            (11,36.5)(12,36.02)(13,35.53)(14,35.14)(15,34.94)
            (16,34.69)(17,34.26)(18,34.23)(19,33.93)(20,33.96)
            (21,33.8)(22,33.62)(23,33.58)(24,33.36)(25,33.54)
            (26,33.39)(27,33.33)(28,33.13)(29,33.14)(30,33.11)
            (31,33.08)
        };
        \addlegendentry{\name{} w/o $\mathbf{P}_\pi$}
        
    \end{axis}
    \end{tikzpicture}
    \caption{Trends of perplexity during training on test split of WikiText-103 language modeling dataset.}
    \label{fig:wikitext}
\end{figure}

\subsection{WikiText-103}

We evaluate unidirectional \name{} on WikiText-103 \citep{Merity17}. It is a language modeling dataset with about 103 million tokens extracted from verified articles on Wikipedia. 

\subsubsection{Setup and Implementations}
We compare \name{} with the vanilla Transformer and Performer. Models are implemented with \texttt{fairseq} \citep{ott-etal-2019-fairseq}. We adopt hyper-parameters suggested by \texttt{fairseq}%
\footnote{We use the same command-line options as described in \url{https://github.com/pytorch/fairseq/tree/master/examples/language_model}.}%
: 6 layers, hidden dimension of 512, feed forward dimension of 1024, 8 attention heads. Feature map is the same as Equation~\ref{eq:kernel}. $r$ takes its value in $[0.88, 0.99]$.
For comparison with absolute position encoding, we set the sequence length to 512.
Perplexity is measured on the test set. To avoid predicting tokens with little context at the beginning of a sequence, only the last 256 tokens are counted in the results.
Effects of $r$ and $\mathbf{P}_\pi$ are measured separately through ablation studies, i.e., removing $r$ or $\mathbf{P}_\pi$ in Equation~\ref{eq:new_pos_final}.

\subsubsection{Results}

The results for WikiText-103 are listed in Table~\ref{tab:wikitext}. We also plot trending of perplexity during training in Figure~\ref{fig:wikitext}. It shows that \name{} lowers the performance gap between Transformer and Performer. It also speeds up convergence of models.

The last two lines of Table~\ref{tab:wikitext} indicate that performance of \name{} drops without $r$ or $\mathbf{P}_\pi$. Thus, both $r$ and $\mathbf{P}_\pi$ are crucial for \name{}.
$r$ may be helpful for \name{} to focus on local context, while $\mathbf{P}_\pi$ is responsible for encoding relative positional information.

\section{Conclusions}

We discuss possible ways to add relative position encoding to Performer, a family of efficient Transformers scales linearly. Based on the analysis, we propose \name{}, a variant of Performer with position-aware permutation to encode relative positional information. While improving the performance, this novel relative position encoding introduces negligible overhead compared to the overall computational cost of Performer. Experiments show that it runs as fast as Performer.

Extensive experiments are conducted on \name{}, including byte-level text tasks and pixel-level image classification of Long-Range Arena, as well as language modeling on WikiText-103. Bidirectional \name{} is used for the former tasks, while unidirectional \name{} is adopted for the latter one. Results show that \name{} uniformly improves the performance of Performer, accelerates convergence, and achieves state-of-the-art on some tasks.

\section*{Ethical Considerations}

This paper does not introduce new datasets. All the experiments and discussions are based on public datasets, which have been widely used for years. This paper focuses on speeding up NLP models generally. It is not directly connected to specific real-world applications.

The purpose of this paper is to reduce the computational cost of Transformer without performance drop. We hope our work will reduce energy consumption for future work of NLP. We also try our best to reduce carbon cost in experiments, such as minimizing hyper-parameter tuning. It takes about 10 days on 8 V100 GPUs to get all the figures in this paper.

\bibliography{anthology,custom}
\bibliographystyle{acl_natbib}

\appendix

\section{Proof of Proposition 1}
\label{sec:appendix_ortho}

\begin{lemma}
\label{lemma1}
Let
\begin{align}
    \left\{\mathbf{M}_i\right\}_{i=-\infty}^\infty\in\mathbb{R}^{l\times m}, \\
    \left\{\mathbf{N}_i\right\}_{i=-\infty}^\infty\in\mathbb{R}^{l\times n}.
\end{align}
Assume
\begin{align}
    \forall i,j,k\in\mathbb{Z}, \mathbf{M}_i^\top\mathbf{N}_j=\mathbf{M}_{i+k}^\top\mathbf{N}_{j+k}.
\end{align}
Then, there exists
\begin{align}
    \left\{\mathbf{M}'_i\right\}_{i=-\infty}^\infty\in\mathbb{R}^{l'\times m'}, \\
    \left\{\mathbf{N}'_i\right\}_{i=-\infty}^\infty\in\mathbb{R}^{l'\times n'}, \\
    \mathbf{P}\in\mathbb{R}^{m'\times m}, \mathbf{Q}\in\mathbb{R}^{n'\times n},
\end{align}
such that
\begin{align}
    \label{eq:lemma1_res_invariance}
    \forall i,j,k\in\mathbb{Z}, {\mathbf{M}'}_i^\top\mathbf{N}'_j={\mathbf{M}'}_{i+k}^\top\mathbf{N}'_{j+k}, \\
    \label{eq:lemma1_res_equivalence}
    \forall i,j\in\mathbb{Z}, \mathbf{M}_i^\top\mathbf{N}_j=(\mathbf{M}'_i\mathbf{P})^\top(\mathbf{N}'_j\mathbf{Q}), \\
    \label{eq:lemma1_res_im}
    \sum_{i=-\infty}^{\infty}\mathrm{im}(\mathbf{M}'_i)=\mathbb{R}^{l'}, \sum_{i=-\infty}^{\infty}\mathrm{im}(\mathbf{N}'_i)=\mathbb{R}^{l'}, \\
    \label{eq:lemma1_res_ker}
    \forall i\in\mathbb{Z}, \ker(\mathbf{M}'_i)=\{\mathbf{0}\}, \ker(\mathbf{N}'_i)=\{\mathbf{0}\}.
\end{align}
\end{lemma}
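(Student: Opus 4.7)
The plan is a two-stage reduction: first compress the ambient $\mathbb{R}^l$ using the Gram structure between $V_M=\sum_i\mathrm{im}(\mathbf{M}_i)$ and $V_N=\sum_j\mathrm{im}(\mathbf{N}_j)$, and then compress the parameter spaces $\mathbb{R}^m$ and $\mathbb{R}^n$ using a consequence of shift invariance.

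For Stage 1, pick orthonormal bases of the finite-dimensional subspaces $V_M,V_N\subseteq\mathbb{R}^l$ and assemble them into isometric inclusions $\mathbf{J}_M,\mathbf{J}_N$, so that $\mathbf{M}_i=\mathbf{J}_M\tilde{\mathbf{M}}_i$ and $\mathbf{N}_j=\mathbf{J}_N\tilde{\mathbf{N}}_j$. The product becomes $\mathbf{M}_i^\top\mathbf{N}_j=\tilde{\mathbf{M}}_i^\top\mathbf{H}\tilde{\mathbf{N}}_j$ where $\mathbf{H}=\mathbf{J}_M^\top\mathbf{J}_N$ is a fixed Gram matrix. A compact SVD $\mathbf{H}=\mathbf{U}\Sigma\mathbf{V}^\top$ with $\Sigma$ invertible of size $l'\times l'$ then yields $\mathbf{M}_i^\top\mathbf{N}_j=\hat{\mathbf{M}}_i^\top\Sigma\hat{\mathbf{N}}_j$, where $\hat{\mathbf{M}}_i=\mathbf{U}^\top\tilde{\mathbf{M}}_i$ and $\hat{\mathbf{N}}_j=\mathbf{V}^\top\tilde{\mathbf{N}}_j$; by construction, $\sum_i\mathrm{im}(\hat{\mathbf{M}}_i)=\sum_j\mathrm{im}(\hat{\mathbf{N}}_j)=\mathbb{R}^{l'}$.

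For Stage 2, the crux is that all $\hat{\mathbf{M}}_i$ share a common kernel $K_M\subseteq\mathbb{R}^m$, and likewise all $\hat{\mathbf{N}}_j$ share a common $K_N\subseteq\mathbb{R}^n$. Using invertibility of $\Sigma$ and the spanning from Stage 1, one checks that $\mathbf{v}\in\ker(\hat{\mathbf{M}}_i)$ iff $\mathbf{v}^\top\mathbf{M}_i^\top\mathbf{N}_j=\mathbf{0}$ for every $j$, which by shift invariance reduces to $\mathbf{v}^\top f(k)=\mathbf{0}$ for every $k\in\mathbb{Z}$ (with $f(k)$ the common value of $\mathbf{M}_i^\top\mathbf{N}_j$ on $i-j=k$), and the latter condition no longer involves $i$. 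Quotient out uniformly: choose complements $\mathbb{R}^m=C_M\oplus K_M$, $\mathbb{R}^n=C_N\oplus K_N$, take $\mathbf{P}\in\mathbb{R}^{m'\times m}$ and $\mathbf{Q}\in\mathbb{R}^{n'\times n}$ to be the corresponding projections (with $m'=\dim C_M$, $n'=\dim C_N$), and define $\mathbf{M}'_i\in\mathbb{R}^{l'\times m'}$ and $\mathbf{N}'_j\in\mathbb{R}^{l'\times n'}$ as the restrictions of $\Sigma\hat{\mathbf{M}}_i$ and $\hat{\mathbf{N}}_j$ to $C_M$ and $C_N$ respectively, so that $\Sigma\hat{\mathbf{M}}_i=\mathbf{M}'_i\mathbf{P}$ and $\hat{\mathbf{N}}_j=\mathbf{N}'_j\mathbf{Q}$.

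Verification is then routine: (\ref{eq:lemma1_res_equivalence}) holds by construction; (\ref{eq:lemma1_res_im}) inherits from Stage 1 since restricting a map to a complement of its kernel does not change its image; (\ref{eq:lemma1_res_ker}) is exactly what Stage 2 enforces; and (\ref{eq:lemma1_res_invariance}) follows by observing that $\mathbf{P}^\top$ is injective (left-cancellable) and $\mathbf{Q}$ is surjective (right-cancellable), so the shift invariance of $\mathbf{M}_i^\top\mathbf{N}_j=\mathbf{P}^\top(\mathbf{M}'_i)^\top\mathbf{N}'_j\mathbf{Q}$ transfers to $(\mathbf{M}'_i)^\top\mathbf{N}'_j$. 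The main obstacle is establishing the $i$-independence of $\ker(\hat{\mathbf{M}}_i)$: this is the single place where the shift-invariance hypothesis enters essentially, and it is what makes the uniform quotienting in Stage 2 possible.
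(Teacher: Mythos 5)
Your proof is correct, but it takes a genuinely different route from the paper. The paper proves Lemma~\ref{lemma1} by induction on $l,m,n$ with a case analysis: whenever the spanning condition (\ref{eq:lemma1_res_im}) fails it strips one ambient dimension using a rank-one projector $\mathbf{I}-\mathbf{x}\mathbf{x}^\top=\mathbf{A}^\top\mathbf{A}$ applied to both families, and whenever (\ref{eq:lemma1_res_ker}) fails it first shows the offending kernel vector lies in \emph{every} $\ker(\mathbf{N}_j)$ (using shift invariance plus the spanning of the $\mathrm{im}(\mathbf{M}_k)$) and then strips one domain dimension, composing the resulting $\mathbf{Q}$'s through the induction. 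You instead give a one-shot construction: compress the ambient space to $\mathbb{R}^{l'}$ with $l'=\mathrm{rank}(\mathbf{J}_M^\top\mathbf{J}_N)$ via a compact SVD of the Gram matrix between $\sum_i\mathrm{im}(\mathbf{M}_i)$ and $\sum_j\mathrm{im}(\mathbf{N}_j)$, then quotient $\mathbb{R}^m$ and $\mathbb{R}^n$ by the common kernels $K_M,K_N$. The essential shared ingredient is the same in both arguments --- shift invariance together with the spanning of the opposite family forces $\ker(\hat{\mathbf{M}}_i)$ and $\ker(\hat{\mathbf{N}}_j)$ to be independent of position (your Stage 2 is exactly the paper's Case 2 observation, stated globally) --- and your verification of (\ref{eq:lemma1_res_invariance}) via left-cancelling $\mathbf{P}^\top$ (full column rank) and right-cancelling $\mathbf{Q}$ (full row rank) is sound, as are the checks that $\Sigma$'s invertibility and restriction to a complement of the kernel preserve images and kill kernels. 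What your approach buys is an explicit, non-inductive description of $l',m',n'$ and no bookkeeping of composed reductions; what the paper's induction buys is a more elementary argument (no SVD, only rank-one orthogonal complements) that treats the two failure modes symmetrically and feeds directly into its Lemma~\ref{lemma2}. The degenerate case $l'=0$ is handled consistently in your construction (then $K_M=\mathbb{R}^m$, $K_N=\mathbb{R}^n$, so $m'=n'=0$), so no gap there.
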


\begin{proof}
Induction on $l,m,n$.

If $l=m=n=0$, then $\mathbf{M}'_i=\mathbf{M}_i$, $\mathbf{N}'_i=\mathbf{N}_i$, $\mathbf{P}=\mathbf{I}_m$, $\mathbf{Q}=\mathbf{I}_n$ satisfies Equation~\ref{eq:lemma1_res_invariance}-\ref{eq:lemma1_res_ker}.

Obviously, $\mathbf{M}'_i=\mathbf{M}_i$, $\mathbf{N}'_i=\mathbf{N}_i$, $\mathbf{P}=\mathbf{I}_m$, $\mathbf{Q}=\mathbf{I}_n$ satisfies Equation~\ref{eq:lemma1_res_invariance}-\ref{eq:lemma1_res_equivalence}.

\textit{Case 1)} It does not satisfy Equation~\ref{eq:lemma1_res_im}. Without loss of generality, assume $\sum_{i=-\infty}^{\infty}\mathrm{im}(\mathbf{M}_i)\neq\mathbb{R}^{l}$. Then $(\sum_{i=-\infty}^{\infty}\mathrm{im}(\mathbf{M}_i))^\perp\neq\{\mathbf{0}\}$.
Let unit vector $\mathbf{x}\in(\sum_{i=-\infty}^{\infty}\mathrm{im}(\mathbf{M}_i))^\perp$. For any $i$, $\mathbf{x}\in\mathrm{im}(\mathbf{M}_i)^\perp$, so $\mathbf{M}_i=(\mathbf{I}_l-\mathbf{xx}^\top)\mathbf{M}_i$. Since $\mathrm{rank}(\mathbf{I}_l-\mathbf{xx}^\top)=l-1$, there is $\mathbf{A}\in\mathbb{R}^{(l-1)\times l}$ such that $\mathbf{I}_l-\mathbf{xx}^\top=\mathbf{A}^\top\mathbf{A}$.

Let $\Tilde{\mathbf{M}}_i=\mathbf{A}\mathbf{M}_i$, $\Tilde{\mathbf{N}}_i=\mathbf{A}\mathbf{N}_i$. Then,
$\mathbf{M}_i^\top\mathbf{N}_j=\Tilde{\mathbf{M}}_i^\top\Tilde{\mathbf{N}}_j$.
By induction, there is $\mathbf{M}'_i$, $\mathbf{N}'_i$, $\mathbf{P}$, $\mathbf{Q}$ that satisfies Equation~\ref{eq:lemma1_res_invariance}-\ref{eq:lemma1_res_ker}.

\textit{Case 2)} It satisfies Equation~\ref{eq:lemma1_res_im}, but not Equation~\ref{eq:lemma1_res_ker}. Without loss of generality, assume unit vector $\mathbf{x}\in\ker(\mathbf{N}_i)$ for some $i$. Then, for any $j,k$, $\mathbf{M}_k^\top\mathbf{N}_j\mathbf{x}=\mathbf{M}_{k+i-j}^\top\mathbf{N}_i\mathbf{x}=\mathbf{0}$. Thus, $\mathbf{N}_j\mathbf{x}\in\mathrm{im}(\mathbf{M}_k)^\perp$ for any $k$. Equivalently, $\mathbf{N}_j\mathbf{x}\in(\sum_{k=-\infty}^{\infty}\mathrm{im}(\mathbf{M}_k))^\perp$. By Equation~\ref{eq:lemma1_res_im}, $\mathbf{N}_j\mathbf{x}=\mathbf{0}$. So $\mathbf{x}\in\ker(\mathbf{N}_j)$ for any $j$.

Therefore, for any $j\in\mathbb{Z}$, $\mathbf{N}_j=\mathbf{N}_j(\mathbf{I}_n-\mathbf{xx}^\top)$. Since $\mathrm{rank}(\mathbf{I}_n-\mathbf{xx}^\top)=n-1$, there is $\mathbf{A}\in\mathbb{R}^{(n-1)\times n}$ such that $\mathbf{I}_n-\mathbf{xx}^\top=\mathbf{A}^\top\mathbf{A}$.

Let $\Tilde{\mathbf{M}}_i=\mathbf{M}_i$, $\Tilde{\mathbf{N}}_i=\mathbf{N}_i\mathbf{A}^\top$.
Then $\Tilde{\mathbf{M}}_i^\top\Tilde{\mathbf{N}}_j=\mathbf{M}_i^\top\mathbf{N}_j\mathbf{A}^\top=\mathbf{M}_{i+k}^\top\mathbf{N}_{j+k}\mathbf{A}^\top=\Tilde{\mathbf{M}}_{i+k}^\top\Tilde{\mathbf{N}}_{j+k}$. By induction we have $\Tilde{\mathbf{M}}'_i$, $\Tilde{\mathbf{N}}'_i$, $\Tilde{\mathbf{P}}$, $\Tilde{\mathbf{Q}}$ that
\begin{align*}
    \forall i,j,k\in\mathbb{Z}, {\Tilde{\mathbf{M}}'}_i{}^\top\Tilde{\mathbf{N}}'_j={\Tilde{\mathbf{M}}'}_{i+k}{}^\top\Tilde{\mathbf{N}}'_{j+k}, \\
    \forall i,j\in\mathbb{Z}, \Tilde{\mathbf{M}}_i^\top\Tilde{\mathbf{N}}_j=(\Tilde{\mathbf{M}}'_i\Tilde{\mathbf{P}})^\top(\Tilde{\mathbf{N}}'_j\Tilde{\mathbf{Q}}), \\
    \sum_{i=-\infty}^{\infty}\mathrm{im}(\Tilde{\mathbf{M}}'_i)=\mathbb{R}^{\Tilde{l}'}, \sum_{i=-\infty}^{\infty}\mathrm{im}(\Tilde{\mathbf{N}}'_i)=\mathbb{R}^{\Tilde{l}'}, \\
    \forall i\in\mathbb{Z}, \ker(\Tilde{\mathbf{M}}'_i)=\{\mathbf{0}\}, \ker(\Tilde{\mathbf{N}}'_i)=\{\mathbf{0}\}.
\end{align*}
So $\mathbf{M}'_i=\Tilde{\mathbf{M}}'_i$, $\mathbf{N}'_i=\Tilde{\mathbf{N}}'_i$, $\mathbf{P}=\Tilde{\mathbf{P}}$, $\mathbf{Q}=\Tilde{\mathbf{Q}}\mathbf{A}$ satisfies Equation~\ref{eq:lemma1_res_invariance}-\ref{eq:lemma1_res_ker}.

\textit{Case 3)} It satisfies both Equation~\ref{eq:lemma1_res_im} and Equation~\ref{eq:lemma1_res_ker}. Nothing to prove.
\end{proof}

\begin{lemma}
\label{lemma2}
Let
\begin{align}
    \left\{\mathbf{M}_i\right\}_{i=-\infty}^\infty\in\mathbb{R}^{l\times m}, \\
    \left\{\mathbf{N}_i\right\}_{i=-\infty}^\infty\in\mathbb{R}^{l\times n}.
\end{align}
Assume
\begin{align}
    \forall i,j,k\in\mathbb{Z}, \mathbf{M}_i^\top\mathbf{N}_j=\mathbf{M}_{i+k}^\top\mathbf{N}_{j+k}, \\
    \sum_{i=-\infty}^{\infty}\mathrm{im}(\mathbf{M}_i)=\mathbb{R}^{l}, \sum_{i=-\infty}^{\infty}\mathrm{im}(\mathbf{N}_i)=\mathbb{R}^{l}, \\
    \forall i\in\mathbb{Z}, \ker(\mathbf{M}_i)=\{\mathbf{0}\}, \ker(\mathbf{N}_i)=\{\mathbf{0}\}.
\end{align}
Then, there exists
\begin{align}
    \left\{\mathbf{M}'_i\right\}_{i=-\infty}^\infty\in\mathbb{R}^{l\times l}, \\
    \left\{\mathbf{N}'_i\right\}_{i=-\infty}^\infty\in\mathbb{R}^{l\times l}, \\
    \mathbf{P}\in\mathbb{R}^{l\times m}, \mathbf{Q}\in\mathbb{R}^{l\times n},
\end{align}
such that
\begin{align}
    \label{eq:lemma2_res_invariance}
    \forall i,j,k\in\mathbb{Z}, {\mathbf{M}'}_i^\top\mathbf{N}'_j={\mathbf{M}'}_{i+k}^\top\mathbf{N}'_{j+k}, \\
    \label{eq:lemma2_res_equivalence}
    \forall i,j\in\mathbb{Z}, \mathbf{M}_i^\top\mathbf{N}_j=(\mathbf{M}'_i\mathbf{P})^\top(\mathbf{N}'_j\mathbf{Q}), \\
    \label{eq:lemma2_res_im}
    \sum_{i=-\infty}^{\infty}\mathrm{im}(\mathbf{M}'_i)=\mathbb{R}^{l}, \sum_{i=-\infty}^{\infty}\mathrm{im}(\mathbf{N}'_i)=\mathbb{R}^{l}, \\
    \label{eq:lemma2_res_ker}
    \forall i\in\mathbb{Z}, \ker(\mathbf{M}'_i)=\{\mathbf{0}\}, \ker(\mathbf{N}'_i)=\{\mathbf{0}\}.
\end{align}
\end{lemma}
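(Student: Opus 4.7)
The plan is to concatenate a finite window of shifts of $\mathbf{M}_i$ and $\mathbf{N}_j$ into wider matrices that still satisfy shift invariance, prove those wider matrices share a common kernel, and then quotient by that kernel to land in a copy of $\mathbb{R}^l$ in which the induced operators are automatically invertible. First I would use $\sum_i \mathrm{im}(\mathbf{M}_i)=\mathbb{R}^l$ and finite-dimensionality to pick shifts $a_1,\dots,a_s$ with $\sum_k \mathrm{im}(\mathbf{M}_{a_k})=\mathbb{R}^l$, and analogously $b_1,\dots,b_t$ for $\mathbf{N}$. Setting
\begin{align*}
\tilde{\mathbf{M}}_i=[\mathbf{M}_{i+a_1}\,|\,\cdots\,|\,\mathbf{M}_{i+a_s}],\qquad \tilde{\mathbf{N}}_j=[\mathbf{N}_{j+b_1}\,|\,\cdots\,|\,\mathbf{N}_{j+b_t}],
\end{align*}
each block of $\tilde{\mathbf{M}}_i^\top \tilde{\mathbf{N}}_j$ has the form $\mathbf{M}_{i+a_k}^\top \mathbf{N}_{j+b_{k'}}$, which depends only on $(i-j)+(a_k-b_{k'})$, so the full product depends only on $i-j$; by construction $\tilde{\mathbf{M}}_0$ and $\tilde{\mathbf{N}}_0$ surject onto $\mathbb{R}^l$.

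The key step is to show that $\ker(\tilde{\mathbf{M}}_i)$ is a single subspace $K_M$ independent of $i$. Given $\mathbf{v}\in\ker(\tilde{\mathbf{M}}_0)$, the shift identity $\tilde{\mathbf{N}}_j^\top \tilde{\mathbf{M}}_i=\tilde{\mathbf{N}}_{j-i}^\top \tilde{\mathbf{M}}_0$ yields $\tilde{\mathbf{N}}_j^\top \tilde{\mathbf{M}}_i \mathbf{v}=0$ for every $j$, so $\tilde{\mathbf{M}}_i\mathbf{v}$ is orthogonal to $\mathrm{im}(\tilde{\mathbf{N}}_0)=\mathbb{R}^l$ and must vanish; the reverse inclusion is obtained by exchanging $0$ and $i$, and swapping the roles of $\mathbf{M}$ and $\mathbf{N}$ supplies a common kernel $K_N$ on the other side. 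Since $\tilde{\mathbf{M}}_0$ is surjective with kernel $K_M$, the quotient $\mathbb{R}^{sm}/K_M$ has dimension $l$; fixing an isomorphism $\mathbb{R}^{sm}/K_M\cong\mathbb{R}^l$ factors every $\tilde{\mathbf{M}}_i$ as $\tilde{\mathbf{M}}_i=\mathbf{M}'_i\bar{\pi}_M$ with $\mathbf{M}'_i\in\mathbb{R}^{l\times l}$, and $\mathbf{M}'_i$ is square and injective on the quotient, hence invertible. The same construction yields invertible $\mathbf{N}'_j$ via $K_N$.

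Finally I would set $\mathbf{P}=\bar{\pi}_M \mathbf{E}_M$ and $\mathbf{Q}=\bar{\pi}_N\mathbf{E}_N$, where $\mathbf{E}_M,\mathbf{E}_N$ are the first-block embeddings (so that $\tilde{\mathbf{M}}_i\mathbf{E}_M=\mathbf{M}_i$). Then $\mathbf{M}_i=\mathbf{M}'_i\mathbf{P}$ and $\mathbf{N}_j=\mathbf{N}'_j\mathbf{Q}$, giving the required identity $\mathbf{M}_i^\top \mathbf{N}_j=(\mathbf{M}'_i\mathbf{P})^\top(\mathbf{N}'_j\mathbf{Q})$. Shift invariance of $\mathbf{M}'_i{}^\top \mathbf{N}'_j$ transfers from that of $\tilde{\mathbf{M}}_i^\top \tilde{\mathbf{N}}_j$ because $\bar{\pi}_M^\top$ is injective (being the transpose of a surjection) and $\bar{\pi}_N$ is surjective, so an identity of the form $\bar{\pi}_M^\top \mathbf{A}\bar{\pi}_N=0$ forces $\mathbf{A}=0$; the trivial-kernel and image-spanning conditions on $\mathbf{M}'_i,\mathbf{N}'_j$ are then immediate from invertibility. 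The main obstacle I anticipate is the constant-kernel step, which genuinely needs the image-spanning hypothesis on the \emph{opposite} sequence to turn orthogonality into vanishing; without that input the square block $\mathbf{M}'_i$ would only be well-defined at $i=0$ and the construction would collapse.
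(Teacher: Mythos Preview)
Your argument is correct and is a genuinely different construction from the paper's. The paper proceeds by induction on $l-m$ and $l-n$: when $n<l$ it finds a single vector $\mathbf{N}_p\mathbf{e}\notin\mathrm{im}(\mathbf{N}_0)$, appends the column $\mathbf{N}_{p+i}\mathbf{e}$ to each $\mathbf{N}_i$, verifies that shift invariance, image-spanning, and trivial kernels all survive, and recurses until both widths reach $l$. You instead concatenate enough shifts at once to force surjectivity at index $0$, prove directly that the kernels $\ker(\tilde{\mathbf M}_i)$ and $\ker(\tilde{\mathbf N}_j)$ are constant in $i,j$ (this is exactly the place where surjectivity of the \emph{opposite} family is consumed), and then pass to the quotient to obtain square invertible $\mathbf{M}'_i,\mathbf{N}'_j$ in one stroke. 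Your route is more conceptual and avoids the bookkeeping of the one-column-at-a-time induction; the paper's route never needs to name the common kernel explicitly and stays closer to concrete matrix manipulations. Both proofs hinge on the same mechanism---shift invariance lets you transport a kernel vector to every index, and surjectivity of the other side turns ``orthogonal to everything'' into ``zero.''

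One small point to tighten: your final identity $\tilde{\mathbf M}_i\mathbf{E}_M=\mathbf{M}_i$ requires $a_1=0$ (and likewise $b_1=0$). This is harmless---the hypothesis $\ker(\mathbf{M}_0)=\{\mathbf 0\}$ lets you always include the shift $0$ among the $a_k$ and then add further shifts until $\tilde{\mathbf M}_0$ surjects---but you should state it, since without it $\tilde{\mathbf M}_i\mathbf{E}_M$ recovers $\mathbf{M}_{i+a_1}$ rather than $\mathbf{M}_i$.
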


\begin{proof}
Induction on $l-m$, $l-n$.

If $l-m=l-n=0$, then $\mathbf{M}'_i=\mathbf{M}_i$, $\mathbf{N}'_i=\mathbf{N}_i$, $\mathbf{P}=\mathbf{I}_l$, $\mathbf{Q}=\mathbf{I}_l$ satisfies Equation~\ref{eq:lemma2_res_invariance}-\ref{eq:lemma2_res_ker}.

Without loss of generality, we only need to discuss the case that $n<l$.

If $n<l$, $\mathrm{im}(\mathbf{N}_0)\neq\mathbb{R}^l$.  On the other hand, $\sum_{i=-\infty}^{\infty}\mathrm{im}(\mathbf{N}_i)=\mathbb{R}^{l}$. So there is a column of $\mathbf{N}_p$ for some $p\neq0$ that in $\mathbb{R}^{l}\backslash\mathrm{im}(\mathbf{N}_0)$. More generally, there is a vector $\mathbf{e}\in\mathbb{R}^n$ and an integer $p$, that $\mathbf{N}_p\mathbf{e}\in\mathbb{R}^{l}\backslash\mathrm{im}(\mathbf{N}_0)$.

Let $\Tilde{\mathbf{M}}_i=\mathbf{M}_i$, $\Tilde{\mathbf{N}}_i=[\mathbf{N}_i, \mathbf{N}_{p+i}\mathbf{e}]$, $\mathbf{A}=[\mathbf{I}_n, \mathbf{0}_n]^\top$. Then,
\begin{align}
    &\Tilde{\mathbf{M}}_i^\top\Tilde{\mathbf{N}}_j\mathbf{A}=\mathbf{M}_i^\top\mathbf{N}_j. \\
    \begin{split}
    &\Tilde{\mathbf{M}}_i^\top\Tilde{\mathbf{N}}_j \\
    =&[\mathbf{M}_i^\top\mathbf{N}_j,\mathbf{M}_i^\top\mathbf{N}_{p+j}\mathbf{e}] \\
    =&[\mathbf{M}_{i+k}^\top\mathbf{N}_{j+k},\mathbf{M}_{i+k}^\top\mathbf{N}_{p+j+k}\mathbf{e}] \\
    =&\Tilde{\mathbf{M}}_{i+k}^\top\Tilde{\mathbf{N}}_{j+k}.
    \end{split} \\
    &\sum_{i=-\infty}^{\infty}\mathrm{im}(\Tilde{\mathbf{M}}_i)=\sum_{i=-\infty}^{\infty}\mathrm{im}(\mathbf{M}_i)=\mathbb{R}^{l}. \\
    &\forall i\in\mathbb{Z}, \ker(\Tilde{\mathbf{M}}_i)=\ker(\mathbf{M}_i)=\{\mathbf{0}\}. \\
    &\mathbb{R}^{l}\supset\sum_{i=-\infty}^{\infty}\mathrm{im}(\Tilde{\mathbf{N}}_i)\supset\sum_{i=-\infty}^{\infty}\mathrm{im}(\mathbf{N}_i)=\mathbb{R}^{l}.
\end{align}
If for some $i$, $\ker(\Tilde{\mathbf{N}}_i)\neq\{\mathbf{0}\}$, let $\mathbf{x}$ be a non-zero vector in $\ker(\Tilde{\mathbf{N}}_i)$. Then, for any $k$, $\Tilde{\mathbf{M}}_k^\top\Tilde{\mathbf{N}}_0\mathbf{x}=\Tilde{\mathbf{M}}_{k+i}^\top\Tilde{\mathbf{N}}_i\mathbf{x}=\mathbf{0}$.
Thus, $\Tilde{\mathbf{N}}_0\mathbf{x}\in\mathrm{im}(\Tilde{\mathbf{M}}_k)^\perp$ for any $k$. Equivalently, $\Tilde{\mathbf{N}}_0\mathbf{x}\in(\sum_{k=-\infty}^{\infty}\mathrm{im}(\Tilde{\mathbf{M}}_k))^\perp=\{\mathbf{0}\}$. So $\ker(\Tilde{\mathbf{N}}_0)\neq\{\mathbf{0}\}$. However, by construction $\Tilde{\mathbf{N}}_0=[\mathbf{N}_0, \mathbf{N}_p\mathbf{e}]$, so $\ker(\Tilde{\mathbf{N}}_0)=\{\mathbf{0}\}$. Thus,
\begin{align}
\forall i\in\mathbb{Z}, \ker(\Tilde{\mathbf{N}}_i)=\{\mathbf{0}\}.
\end{align}

By induction we have $\Tilde{\mathbf{M}}'_i$, $\Tilde{\mathbf{N}}'_i$, $\Tilde{\mathbf{P}}$, $\Tilde{\mathbf{Q}}$ that
\begin{align*}
    \forall i,j,k\in\mathbb{Z}, {\Tilde{\mathbf{M}}'}_i{}^\top\Tilde{\mathbf{N}}'_j={\Tilde{\mathbf{M}}'}_{i+k}{}^\top\Tilde{\mathbf{N}}'_{j+k}, \\
    \forall i,j\in\mathbb{Z}, \Tilde{\mathbf{M}}_i^\top\Tilde{\mathbf{N}}_j=(\Tilde{\mathbf{M}}'_i\Tilde{\mathbf{P}})^\top(\Tilde{\mathbf{N}}'_j\Tilde{\mathbf{Q}}), \\
    \sum_{i=-\infty}^{\infty}\mathrm{im}(\Tilde{\mathbf{M}}'_i)=\mathbb{R}^{\Tilde{l}'}, \sum_{i=-\infty}^{\infty}\mathrm{im}(\Tilde{\mathbf{N}}'_i)=\mathbb{R}^{\Tilde{l}'}, \\
    \forall i\in\mathbb{Z}, \ker(\Tilde{\mathbf{M}}'_i)=\{\mathbf{0}\}, \ker(\Tilde{\mathbf{N}}'_i)=\{\mathbf{0}\}.
\end{align*}

So $\mathbf{M}'_i=\Tilde{\mathbf{M}}'_i$, $\mathbf{N}'_i=\Tilde{\mathbf{N}}'_i$, $\mathbf{P}=\Tilde{\mathbf{P}}$, $\mathbf{Q}=\Tilde{\mathbf{Q}}\mathbf{A}$ satisfies Equation~\ref{eq:lemma2_res_invariance}-\ref{eq:lemma2_res_ker}.

\end{proof}

\begin{proposition_appendix}
Let $\left\{\mathbf{M}_i\right\}_{i=-\infty}^\infty$ be a series of $l\times m$ matrices, $\left\{\mathbf{N}_i\right\}_{i=-\infty}^\infty$ be a series of $l\times n$ matrices.
Then,  $\mathbf{M}_i^\top\mathbf{N}_j$ only depends on $i-j$, if and only if that, there is an integer $l'$, matrices $\mathbf{P}\in\mathbb{R}^{l'\times m}$, $\mathbf{Q}\in\mathbb{R}^{l'\times n}$, and an invertible matrix $\mathbf{A}\in\mathbb{R}^{l'\times l'}$, such that
\begin{align}
    \mathbf{M}_i^\top\mathbf{N}_j=(\mathbf{A}^{-i\top}\mathbf{P})^\top(\mathbf{A}^j\mathbf{Q}),
\end{align}
\end{proposition_appendix}

\begin{proof}

$(\Leftarrow)$ \textit{If} part.

$\mathbf{M}_i^\top\mathbf{N}_j=(\mathbf{A}^{-i\top}\mathbf{P})^\top(\mathbf{A}^j\mathbf{Q})=\mathbf{P}^\top\mathbf{A}^{j-i}\mathbf{Q}$ depends on $i-j$ only.

$(\Rightarrow)$ \textit{Only If} part.

By Lemma~\ref{lemma1} and Lemma~\ref{lemma2}, there is
\begin{align}
    \left\{\mathbf{M}'_i\right\}_{i=-\infty}^\infty\in\mathbb{R}^{l'\times l'}, \\
    \left\{\mathbf{N}'_i\right\}_{i=-\infty}^\infty\in\mathbb{R}^{l'\times l'}, \\
    \mathbf{P}\in\mathbb{R}^{l'\times m}, \mathbf{Q}\in\mathbb{R}^{l'\times n},
\end{align}
such that
\begin{align}
    \label{eq:prop_res_invariance}
    \forall i,j,k\in\mathbb{Z}, {\mathbf{M}'}_i^\top\mathbf{N}'_j={\mathbf{M}'}_{i+k}^\top\mathbf{N}'_{j+k}, \\
    \label{eq:prop_res_equivalence}
    \forall i,j\in\mathbb{Z}, \mathbf{M}_i^\top\mathbf{N}_j=(\mathbf{M}'_i\mathbf{P})^\top(\mathbf{N}'_j\mathbf{Q}), \\
    \label{eq:prop_res_rank}
    \forall i\in\mathbb{Z}, \mathrm{rank}(\mathbf{M}'_i)=l', \mathrm{rank}(\mathbf{N}'_i)=l'.
\end{align}
Since ${\mathbf{M}'}_0^\top\mathbf{N}'_{i-1}={\mathbf{M}'}_1^\top\mathbf{N}'_i$,
\begin{align}
    \mathbf{N}'_i&=({\mathbf{M}'}_1^{-\top}{\mathbf{M}'}_0^\top)\mathbf{N}'_{i-1} \\
    &=({\mathbf{M}'}_1^{-\top}{\mathbf{M}'}_0^\top)^i\mathbf{N}'_0 \\
    &=\mathbf{A}^i\mathbf{N}'_0,
\end{align}
where $\mathbf{A}\in\mathbb{R}^{l'\times l'}$ is an invertible matrix.
Similarly, $\mathbf{M}'_i=\mathbf{B}^i\mathbf{M}'_0$.
Substitute them into Equation~\ref{eq:prop_res_invariance}, we have
\begin{align}
    \begin{split}
    &\forall i,j,k\in\mathbb{Z}, \\
    &{\mathbf{M}'}_0^\top\mathbf{B}^{i\top}\mathbf{A}^j\mathbf{N}'_0={\mathbf{M}'}_0^\top\mathbf{B}^{(i+k)\top}\mathbf{A}^{j+k}\mathbf{N}'_0
    \end{split}
\end{align}
Since $A$, $B$, $\mathbf{N}'_0$ and $\mathbf{M}'_0$ are invertible,
\begin{align}
    \forall ki\in\mathbb{Z}, \mathbf{B}^{k\top}\mathbf{A}^k=\mathbf{I}.
\end{align}
Thus, $\mathbf{B}=\mathbf{A}^{-\top}$.

Thus,
\begin{align}
    \mathbf{M}_i^\top\mathbf{N}_j&=(\mathbf{M}'_i\mathbf{P})^\top(\mathbf{N}'_j\mathbf{Q}) \\
    &=(\mathbf{B}^i\mathbf{M}'_0\mathbf{P})^\top(\mathbf{A}^j\mathbf{N}'_0\mathbf{Q}) \\
    &=(\mathbf{A}^{-i\top}\mathbf{P}')^\top(\mathbf{A}^j\mathbf{Q}'),
\end{align}
where $\mathbf{P}'=\mathbf{M}'_0\mathbf{P}$ and $\mathbf{Q}'=\mathbf{N}'_0\mathbf{Q}$.

\end{proof}

\section{Hyper-parameters for Long-Range Arena}
\label{sec:lra_hyper}
\begin{table}[h]
    \centering
    \begin{tabular}{c||cccc}
    \toprule
        Task & Text & Retrieval & Image & Pathfiner \\
    \midrule
        Batch size &  16 & 32 & 256 & 256 \\
        Epochs & 10 & 10 & 50 & 80 \\
        LR & 1e-5 & 2e-4 & 1e-2 & 5e-4 \\
        Warmup & 4000 & 1000 & 200 & 4000 \\
    \bottomrule
    \end{tabular}
    \caption{Hyper-parameters for Long-Range Arena}
    \label{tab:lra_hyper}
\end{table}

\end{document}